\documentclass[letterpaper, 10 pt, conference]{ieeeconf}  
\IEEEoverridecommandlockouts   

\usepackage{amsfonts}
\usepackage{amsthm}
\makeatletter
\def\th@plain{%
  \thm@notefont{}
  \itshape 
}

\makeatother
\usepackage{mathtools}      
\usepackage{amssymb}

\usepackage[linesnumbered,algoruled,boxed,vlined, noend]{algorithm2e}
\usepackage{dsfont}
\usepackage{comment}
\usepackage{xcolor}
\usepackage{graphicx}
\usepackage{cite}
\usepackage[font=small,labelfont=bf]{caption}
\usepackage{subcaption}
\usepackage{algorithmic}
\usepackage{wrapfig}

\usepackage{enumitem}

\usepackage{overpic}        

\usepackage[textsize=footnotesize]{todonotes}
\usepackage{xcolor}
\usepackage{xargs}
\usepackage{url}
\newcommandx{\kg}[2][1=]{\todo[linecolor=red,
			backgroundcolor=red!10,bordercolor=red,#1]{#2}}
\newcommandx{\jy}[2][1=]{\todo[linecolor=green,
			backgroundcolor=green!10,bordercolor=green,#1]{JY: #2}}
\newcommandx{\sw}[2][1=]{\todo[linecolor=blue,
			backgroundcolor=blue!10,bordercolor=blue,#1]{SW: #2}}

\usepackage[normalem]{ulem}
\def\ul#1{\uline{#1}}

\newtheoremstyle{mystyle}
  {}
  {}
  {\itshape}
  {}
  {\bfseries}
  {.}
  { }
  {}

\theoremstyle{mystyle}
\newtheorem{problem}{Problem}[section]
\newtheorem{theorem}{Theorem}[section]
\newtheorem{proposition}{Proposition}[section]
\newtheorem{lemma}{Lemma}[section]

\theoremstyle{definition}

\theoremstyle{remark}

\def\spoc{{\texttt{{SPOC}}}\xspace}
\def\osg{{\texttt{{OSG}}}\xspace}


\setlength\tabcolsep{1mm}
\setlength{\abovecaptionskip}{1pt}
\setlength{\belowcaptionskip}{1pt}
\setlength{\dbltextfloatsep}{1pt plus .5pt minus .5pt}
\setlength{\textfloatsep}{1pt plus .5pt minus .5pt}
\setlength{\intextsep}{1pt plus .5pt minus .5pt}
\setlength{\belowdisplayskip}{2pt} \setlength{\belowdisplayshortskip}{2pt}
\setlength{\abovedisplayskip}{2pt} \setlength{\abovedisplayshortskip}{2pt}

\SetKwProg{Fn}{Function}{}{}
\SetKwComment{Comment}{$\triangleright$\ }{}

\title{\Large \bf
Sensor Placement for Globally Optimal Coverage of 3D-Embedded Surfaces
}


\author{ Si Wei Feng \and Kai Gao \and Jie Gong \and Jingjin Yu
\thanks{All authors are at Rutgers, the State University of New Jersey, Piscataway, NJ, USA.
S. W. Feng, K. Gao, and J. Yu are with the Department of Computer Science; J. Gong is with the 
Department of Civil and Environmental Engineering.  E-Mails: \{{\tt siwei.feng, kai.gao, jiegong.cee, jingjin.yu}\}\hspace*{.25em}
 @ \hspace*{.25em}rutgers.edu. 
The work is supported in part by NSF awards IIS-1734419 and IIS-1845888.}
}

\begin{document}
\maketitle
\thispagestyle{empty}
\pagestyle{empty}

\begin{abstract}
We carry out a structural and algorithmic study of a mobile sensor coverage optimization problem targeting 2D surfaces embedded in a 3D workspace. The investigated settings model multiple important applications including camera network deployment for surveillance, geological monitoring/survey of 3D terrains, and UVC-based surface disinfection for the prevention of the spread of disease agents (e.g., SARS-CoV-2). Under a unified general ``sensor coverage'' problem, three concrete formulations are examined, focusing on optimizing visibility, single-best coverage quality, and cumulative quality, respectively. After demonstrating the computational intractability of all these formulations, we describe approximation schemes and mathematical programming models for near-optimally solving them. The effectiveness of our methods is thoroughly evaluated under realistic and practical scenarios. 
\end{abstract}

\vspace{-2mm}
\section{Introduction}

We perform a systematic study of a class of mobile sensor\footnote{As will be explained, ``mobile sensor'' is used here in a broad sense.} deployment optimization problems targeting the coverage of 2D surfaces embedded in 3D domains, applicable to a broad set of practical settings, for example: (1) the selection of surveillance camera locations for maximizing the joint coverage of an art museum, (2) the deployment of mobile robots with range-based sensing apparatus for the monitoring of complex 3D terrains with quality assurances, and (3) the optimization of UVC light source locations for disinfecting interior surfaces of public indoor spaces, e.g., airplanes, buses, hospital rooms, and schools (Fig.~\ref{fig:ex}), which is of key relevance to the ongoing COVID-19 pandemic. Despite the problems' apparent differences, the above-mentioned tasks fall under the general problem of placing mobile sensors for optimizing some form of coverage. This work is devoted to providing such a unifying problem formulation, understanding its rich structural properties, and delivering effective computational methods for readily solving the multiple variations, which all have high application potentials.

\begin{figure}[!ht]
    \centering
    \includegraphics[width=0.95\columnwidth]{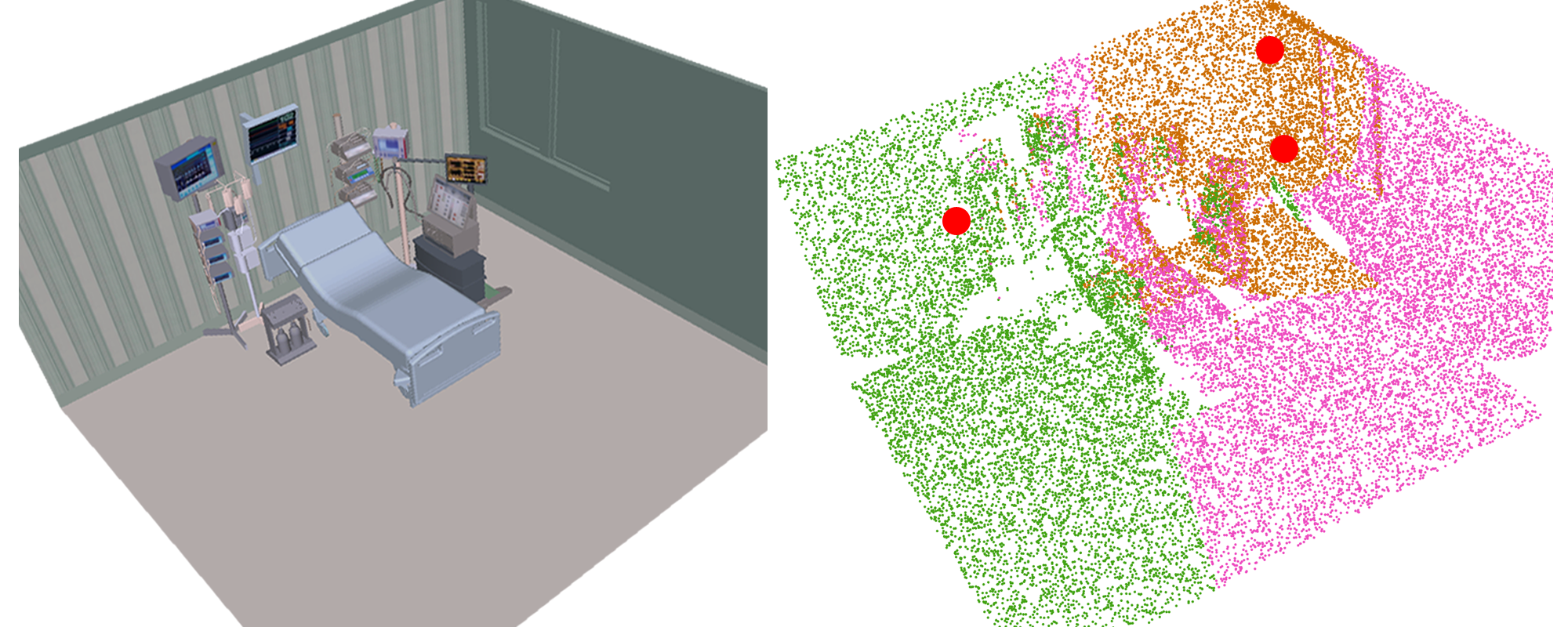}
    \vspace{1mm}
    \caption{[left] The 3D model of an intensive care unit (ICU) in a hospital. [right] A near-optimal coverage of the ICU using three UVC light sources deployed on the ceiling of the room (shown as red discs) where a minimum level of exposure dose can be guaranteed. Each color (orange, green, pink) marks the covered surface locations of a given UVC source. Areas with insufficient exposure are also readily shown as scattered white regions, which can be eliminated through adding more UVC sources.}
    \label{fig:ex}
\end{figure}

As a summary of the research and its contributions, under a general formulation, three coverage optimization problems are examined that are based on \emph{visibility}, \emph{best quality}, and \emph{cumulative quality}, respectively. The visibility model only takes into account line-of-sight sensing. In the best quality model, the coverage quality of a point in the environment is determined by the closest visible sensor, i.e., the quality is determined by distance. A max-min optimization over this quantity is performed. In the cumulative quality model, the coverage of a point is the sum of coverage by all visible ``sensors''. For this model, the area over which a minimum quality can be guaranteed is maximized. Even in simpler 2D settings, these formulations, are known to induce significant computational challenges. They are frequently NP-hard and sometimes hard to approximate, which we briefly discuss. On the algorithmic side, we show how some of these problems can be approximately solved in polynomial time and then develop general integer programming methods, assisted with local improvements, for quickly computing high quality (i.e., $(1+\varepsilon)$-optimal) solutions. Extensive evaluations are performed over multiple realistic application scenarios, confirming the effectiveness of our algorithmic solutions. 

The general formulation and specific models investigated in this work have their origins in two main lines of research: Art Gallery \cite{o1987art,shermer1992recent,o2017visibility} and studies on mobile sensor networks, e.g., \cite{howard2002mobile,cortes2004coverage,martinez2007motion,krause2008near,schwager2009decentralized,hollinger2013sampling}. Our visibility-based model has deep roots in Art Gallery problems \cite{o1987art,shermer1992recent,o2017visibility}, which commonly assume a sensor model based on line-of-sight visibility\cite{lozano1979algorithm}; a main task is to guard every point in the interior of a bounded 2D regions (a point is guarded when it is visible to at least one of the guards). Depending on the exact formulation, guards may be placed on boundaries, corners, or the interior of the region. Not surprisingly, Art Gallery problems are typically NP-hard \cite{lee1986computational}. Other than Art Gallery, 2D coverage problems with other sensing models, e.g., disc-based, have also been considered \cite{thue1910dichteste,hales2005proof,drezner1995facility,cortes2004coverage,pavone2009equitable,pierson2017adapting}. Some formulations prevent the overlapping of individual sensing ranges \cite{thue1910dichteste,hales2005proof} while others seek to ensure a full coverage which often require overlapping sensor coverage.

In an influential body of work \cite{cortes2004coverage,martinez2007motion}, a gradient-based iterative method was devised that drives multiple mobile sensors to a locally optimal coverage configuration, with 
convergence guarantees. 
Whereas \cite{cortes2004coverage,martinez2007motion} assume availability of gradients {\em a priori}, such information can also be learned \cite{schwager2009decentralized}. 
Subsequently, the method is further extended to allow the coverage of non-convex and disjoint 2D domains \cite{schwager2009optimal} and to work for mobile robots with heterogeneous capabilities \cite{pierson2017adapting}. 
In contrast to these iterative local interaction-based methods, this work emphasizes the direct computation of globally optimal solutions under challenging 3D settings. 

 Distributed sensor coverage \cite{cortes2004coverage,schwager2009decentralized} builds on the study of facility location problems \cite{weber1929theory,drezner1995facility} examining the selection of facility (e.g., warehouses) locations that minimize the cost of delivery of goods to spatially distributed customers. These are known (e.g., in operations research and computer science) as clustering problems \cite{har2011geometric}, with many variations depending on the cost structure. Our investigation benefits from the vast literature on clustering and related problems, e.g., \cite{feder1988optimal,hochbaum1985best,gonzalez1985clustering,daskin2000new,shamos1975closest}.
Clustering problems are in turn related to packing \cite{hales2005proof}, tiling \cite{thue1910dichteste}, and Art Gallery problems \cite{o1987art,shermer1992recent}.

This work is a continuation of our systematic effort \cite{FenHanGaoYuRSS19,FenYu2020RAL,FenYuRSS20} at tackling sensor coverage problems. Our earlier studies focus on 1D/2D sensors models covering 1D/2D domains, which are significantly less complicated than the 3D settings examined in the current study.

The rest of the paper is organized as follows. In Section~\ref{sec:preliminary}, we provide an umbrella problem statement and detail the three coverage models. Computational complexity of the formulations is briefly discussed. In Section~\ref{sec:algorithm}, we describe effective algorithmic solutions for solving these sensor coverage problems. Extensive evaluation results are provided in Section~\ref{sec:evaluation} containing multiple realistic application settings. We conclude the work in Section~\ref{sec:conclusion}.\label{sec:intro}

\section{Preliminaries}\label{sec:preliminary}
\subsection{Sensor Placement for Optimal Coverage: Formulations}
Let $\mathcal E \subset \mathbb R^3$ be a bounded three-dimensional workspace that is path-connected, e.g., a hilly terrain or an intensive care unit (ICU) in a hospital. We consider the problem of deploying $k$ ``mobile sensors'', $c_1, \ldots, c_k$, to guard a \emph{critical subset} $S$ embedded in the surface of $\mathcal E$, i.e., $S \subset \partial \mathcal E$ where $\partial$ is the boundary operator. For example, if $\mathcal E$ is a hospital ICU, $S$ may be a part of its interior surface. The sensors are to be deployed to achieve a globally optimal coverage of $S$ satisfying some prescribed objective, to be made more precise as the problems are further grounded for specific sensor models. We denote this broad class of problems as \emph{Sensor Placement for Optimal Coverage} (\spoc). 

The terms \emph{mobile sensor} and \emph{coverage} are used in a broad sense. Beyond traditional sensors that only collect information, we are interested in mobile robots with means to effect the environment as well. For example, a mobile robot may be equipped with a disinfecting light source (e.g., UVC) for eradicating harmful microbes (e.g., viruses and bacteria). Nevertheless, such settings can be nicely captured under a general sensor coverage formulation. 

In this study, we explore two common types of coverage models: \emph{visibility}-based and \emph{quality}-based. 
In a \emph{visibility}-based sensor coverage model, as the term suggests, a point $p \in S$ is considered covered by a sensor $c$ if $p$ is visible from $c$.
When there are more than one sensor, a point $p$ is covered if it is visible from any sensor $c_i \in \{c_1, \ldots, c_k\}$. 
In a \emph{quality}-based model, the coverage quality of a point $p \in S$ is captured by some function that potentially depends on the sensors, the point $p$, and its neighborhood in $S$. For example, one type of quality measurement can be based on the inverse of the distance between a point $p$ and its closest sensor $c$. 

Formally, we capture the different sensor models under a unified function $f(c_1, \ldots, c_k, p, \mathcal E)$ whose co-domain is non-negative reals, i.e., $f: \mathbb R^{3k + 3} \times \mathcal E \to \mathbb R_+\cup\{0\}$. In what follows, $\mathcal E$ is omitted but understood to be part of the input to $f$. In this paper, the following instantiations are considered:
\begin{itemize}[leftmargin=4mm]
    \item Let $vis(p, c) = 1$ if the point $p$ is visible to the sensor $c$. Otherwise, $vis(p, c) = 0$. For example, an omni-directional visibility model would have $vis(p, c) = 1$ if the open line segment between $p$ and $c$ does not intersect $\mathcal E$. In a model based purely on \textbf{visibility},  
    \begin{align}
    f(c_1, \ldots, c_k, p) := \max_{1\le i \le k}vis(p, c_i).\label{f:1}
    \end{align}
    \item Let the \emph{coverage quality} of a point $p$ by a sensor $c_i$ be represented as a function $\phi(p, c_i) \in \mathbb R_+\cup\{0\}$. In a \textbf{quality maximization} sensor model,  the coverage quality for a point $p$ is determined by a single best sensor:  
    \begin{align}f(c_1, \ldots, c_k, p) := \max_{1\le i \le k} \phi(p, c_i)vis(p, c_i).\label{f:2}
    \end{align}
    \item In a \textbf{cumulative quality} sensor model, the overall quality of coverage at a point $p$ is the sum of the effects of all visible sensors:
    \begin{align}f(c_1, \ldots, c_k, p) := \sum_{1\le i \le k}\phi(p, c_i)vis(p, c_i).\label{f:3}
    \end{align}
\end{itemize}

All above models have direct and practical applications. A visibility model is applicable to the deployment of a network of $360^{\circ}$ cameras for monitoring. Letting $\phi(p, c_i) = \|pc_i\|^{-1}$ ($\|pc_i\|$ denotes the distance between $p$ and $c_i$), the quality maximization model becomes the $k$-center problem \cite{weber1929theory} if we optimize $\min_p f(c_1, \ldots, c_k, p)$, a broadly applicable problem. For the cumulative quality model, when the ``sensors'' are UVC lights, one may ask the question of how to optimally place these lights to ensure the highest percentage of $S$ can be exposed to sufficient UVC light for eradicating SARS-CoV-2 and other microbes. Here, a cumulative quality model clearly makes sense. 

To fully ground the discussion that follows, we formulate three concrete optimization problems, one for each of the above-mentioned sensor models. 

\begin{problem}[Visibility Maximization]\label{p:1} Given $S \subset \partial \mathcal E$ with $\mathcal E \subset \mathbb R^{3}$, $c_i \in \mathbb R^3$, $1 \le i \le k$, and $f(c_1, \ldots, c_k, p)$ from \eqref{f:1}, determine a placement of $c_1, \ldots, c_k$ that maximizes the \ul{support of $f$}, i.e., 
$$supp(f) = \{p \in S \mid \max_{1\le i \le k}vis(p, c_i) >0\}.$$  
\end{problem}

\begin{problem}[Quality Maximization]\label{p:2} Given $S \subset \partial \mathcal E$ with $\mathcal E \subset \mathbb R^{3}$, $c_i \in \mathbb R^3$, $1 \le i \le k$, and $f(c_1, \ldots, c_k, p)$ from \eqref{f:2} with $\phi(p, c_i) = \|pc_i\|^{-1}$, determine a placement of $c_1, \ldots, c_k$ that maximizes the \ul{minimum coverage quality}, i.e., $$\min_{p \in S} \max_{1\le i \le k} \frac{vis(p, c_i)}{ \|pc_i\|}.$$
\end{problem}

\begin{problem}[Cumulative Quality]\label{p:3} Given $S \subset \partial \mathcal E$ with $\mathcal E \subset \mathbb R^{3}$, $c_i \in \mathbb R^3$, $1 \le i \le k$, and $f(c_1, \ldots, c_k, p)$ from \eqref{f:3} with $\phi(p, c_i) = \|pc_i\|^{-2}\langle \hat{n}_p, \hat{n}_{pc_i} \rangle$ where $\hat{n}_p$ is the unit normal of $S$ at $p$ and $\hat{n}_{pc_i}$ is the unit vector in the direction from $p$ to $c_i$, determine a placement of $c_1, \ldots, c_k$ that maximizes the \ul{coverage area where $f$ is above a given threshold} $\Phi > 0$,
$$supp(f - \Phi) = \{p \in S \mid \sum_{1\le i \le k} \frac{vis(p, c_i)\langle \hat{n}_p, \hat{n}_{pc_i} \rangle}{ \|pc_i\|^2} > \Phi \}.$$  
\end{problem}

\begin{wrapfigure}[11]{r}{1.4in}
  \vspace*{-1mm}
  \begin{overpic}[width=1.4in,tics=5]{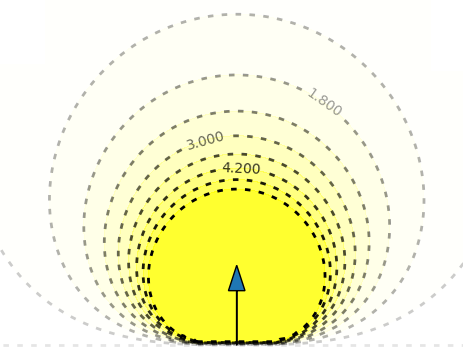}
	\end{overpic}
\vspace*{-3.5mm}
\caption{For a point with the normal shown as the arrow, light sources on the same dotted curve provide same level of exposure.}
\label{fig:exposure}
\end{wrapfigure}
An implicit assumption for Problem~\ref{p:2} to be meaningful is that an arbitrary $p \in S$ is visible to the closest sensor, which limits the choice of $S$. Problem~\ref{p:2} is a suitable model for, e.g., surveillance applications that cannot tolerate any blind spots. A generalization with less limitation can require a certain percentage of $S$, e.g., $80\%$, to have optimized coverage. Problem~\ref{p:3}, which computes coverage quality using the formula  $\langle \hat{n}_p, \hat{n}_{pc_i} \rangle \|pc_i\|^{-2}$, takes after a standard light exposure model that depends on the inverse of the squared distance and incoming light angle with respect to a local surface region (see Fig.~\ref{fig:exposure}). 

A 2D illustration of the three models is given in Fig.~\ref{fig:models}. 

\begin{figure}[ht]
    \vspace{1mm}
\centering
\includegraphics[width=0.155\textwidth]{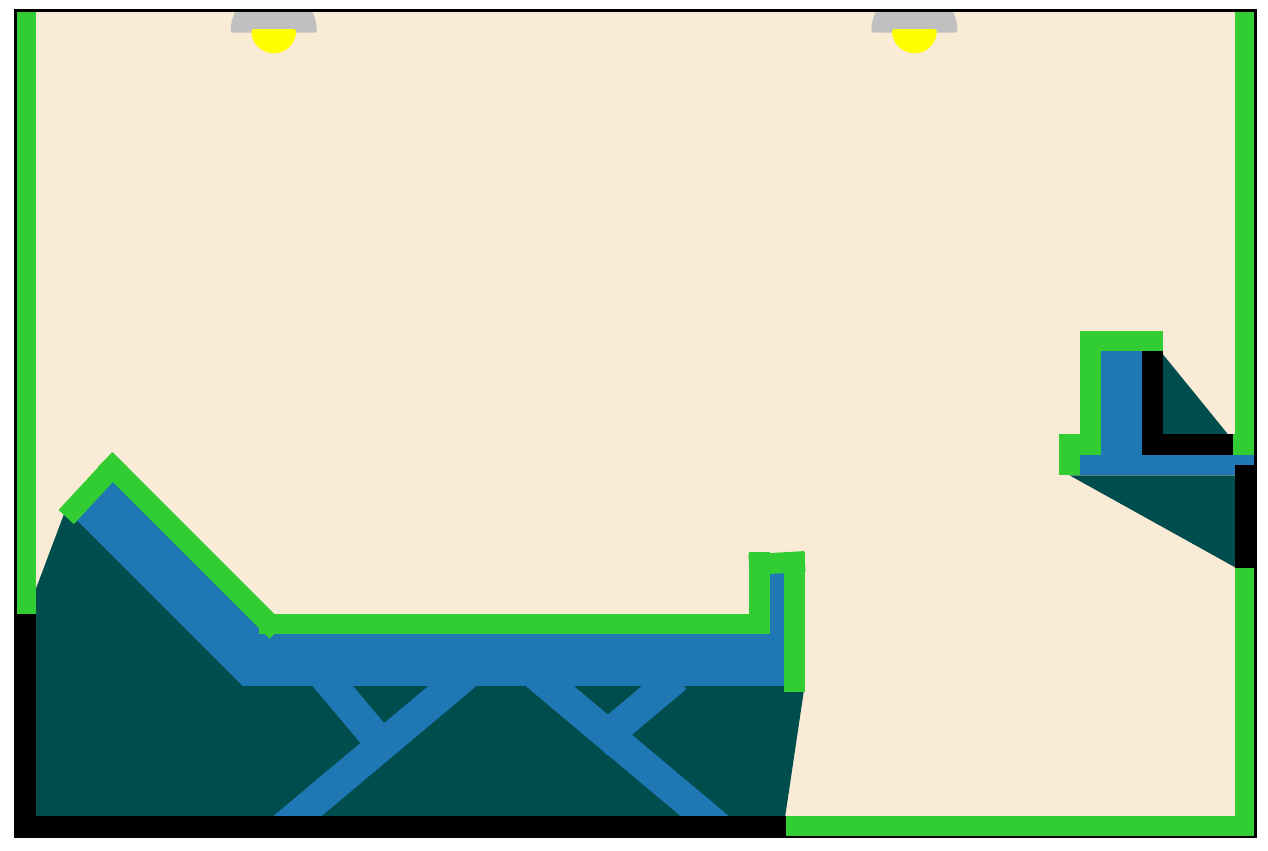}
\includegraphics[width=0.155\textwidth]{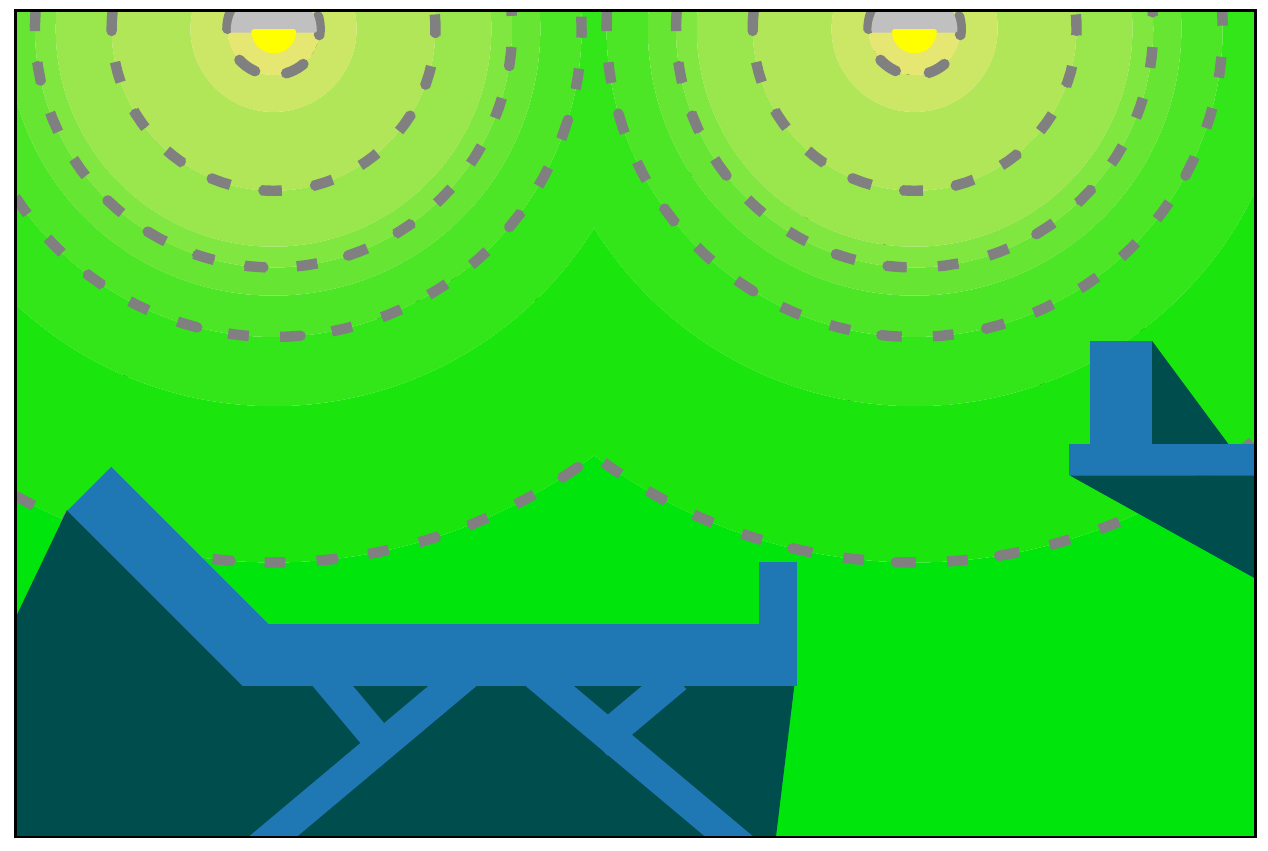}
\includegraphics[width=0.155\textwidth]{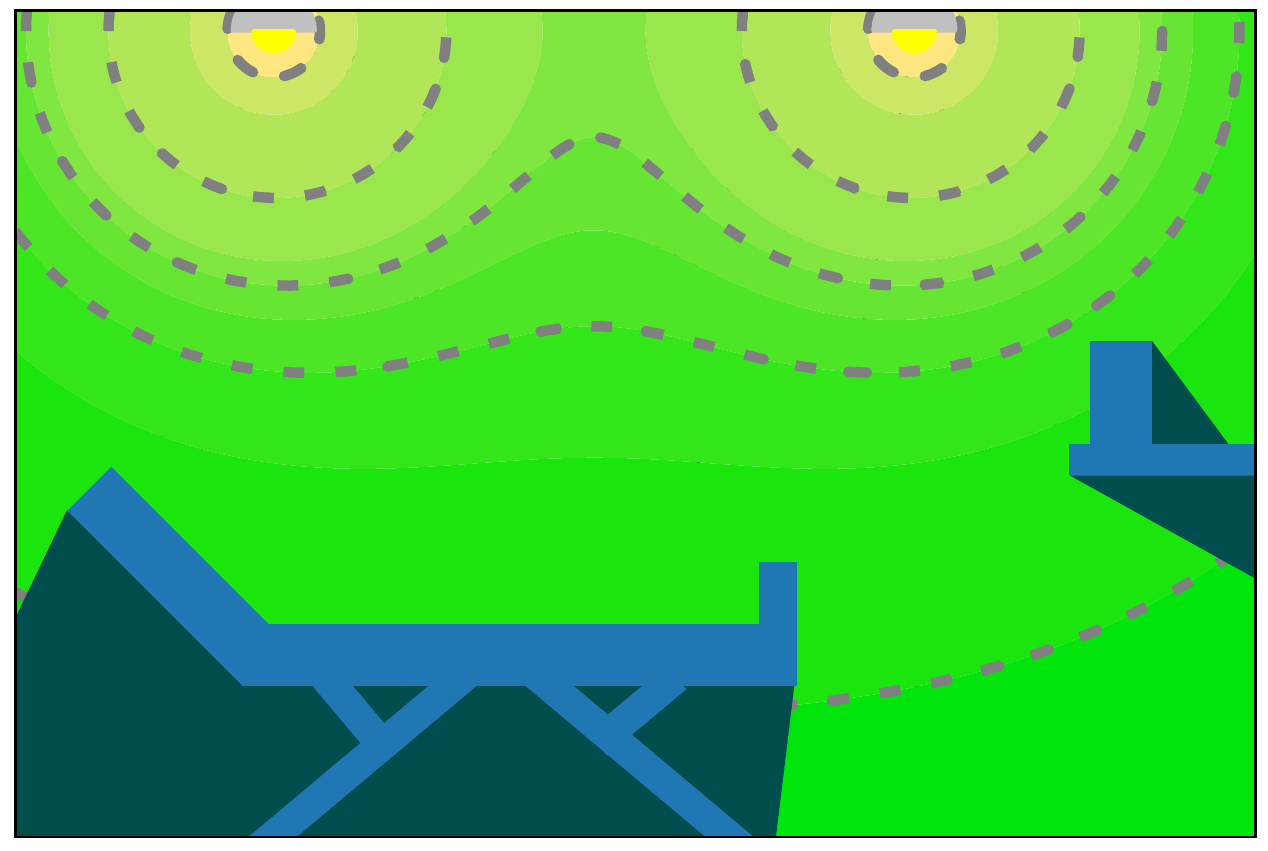}
\centering
    \vspace{1mm}
\caption{Illustration of the effects of two sensors (or light sources) under three different models. Only a 2D slice of a simple ICU with a bed and a counter is shown for clarity. [left] A visibility-based sensing model where the green segments show the visible surfaces. [middle] A quality maximization model where the dotted lines show the equal-quality level sets. $\phi(p, c_i) = \|pc_i\|^{-1}$ [right] A cumulative quality model with $\phi(p, c_i) = \|pc_i\|^{-2}\langle \hat{n}_p, \hat{n}_{pc_i} \rangle$. Again, the dotted lines show the additive quality. The impact of the surface normal is not displayed in the figure.} 
    \label{fig:models}
\end{figure}

Problems~\ref{p:1}-\ref{p:3} allow sensor locations to be anywhere in $\mathcal E$. In practice, sensor locations are often limited to a 2D surface. For example, in a museum or a bus, cameras are mounted on walls and ceilings. For drones surveying an area, there is often a preferred height to fly at. With this in mind, whereas algorithms we develop are general, the evaluation is mainly focused on practical settings where sensors locations are confined to some 2D surface.

\subsection{Computational Complexity}\label{subsec:complexity}
\vspace{-1mm}
As the computation of 3D visibility is a well-known hard problem \cite{canny1987new}
Problems~\ref{p:1}-\ref{p:3} are all computationally intractable because they all involve, as part of the solution, computation of 3D visibility sets. The involvement of multiple robots/sensors introduces additional sources of computational complexity, which we briefly discuss. 

Problem~\ref{p:1} may be viewed as an Art Gallery \cite{o1987art} problem in 3D. The basic 2D Art Gallery problem, which asks the question that how many guards with omnidirectional visibility are needed to ensure that every point in a simply-connected (2D) polygon is visible to at least one guard, is shown to be NP-hard\cite{lee1986computational}. Problem~\ref{p:1} is then also NP-hard through reducing the 2D Art Gallery problem to a 3D one by creating a third dimension that is very ``thin''.

Our recent work \cite{FenYuRSS20} shows that a 2D version of Problem~\ref{p:2}, called Optimal Set Guarding (\osg), is NP-hard to approximate within a factor of 1.152. Similarly, we may reduce \osg to Problem~\ref{p:2} by adding a thin third dimension. Therefore, through this route, we know that optimal solutions to Problem~\ref{p:2} is hard to approximate within a factor of 1.152, even when the surface $S$  is a simple polygon. 

From an instance of Problem~\ref{p:2}, reduced from an instance of \osg, we can add further 3D structures to obtain an instance of Problem~\ref{p:3} such that cumulative effect from multiple sensors are limited. That is, when sensors are forced to have no interactions, a version of Problem~\ref{p:3} that is similar to Problem~\ref{p:2} is obtained, which is again NP-hard. 

We summarize the discussion in Theorem~\ref{t:hardness}. Full proofs of these complexity results, which are too lengthy to be included here and are not as essential in comparison to the problem formulations and the algorithmic results, will be detailed in an extended version of this work. 
\vspace{-1mm}

\begin{theorem}\label{t:hardness}
Problems~\ref{p:1}-\ref{p:3} are NP-hard.
\end{theorem}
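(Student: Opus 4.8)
\emph{Proof sketch.} The plan is to prove NP-hardness of each problem by a polynomial-time reduction from a known NP-hard problem, exploiting the observation from Section~\ref{subsec:complexity} that each formulation contains a 2D hard instance as a ``thin'' special case. Throughout, $\delta > 0$ denotes a parameter chosen polynomially small relative to the smallest feature size of the input geometry.

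For Problem~\ref{p:1} I would reduce from the classical 2D Art Gallery decision problem---``do $k$ omnidirectional guards suffice to see all of a simple polygon $P$?''---which is NP-hard \cite{lee1986computational}. Given $P \subset \mathbb{R}^2$, build $\mathcal{E} = P \times [0,\delta]$ and let $S = P \times \{0\} \cup P \times \{\delta\} \cup \partial P \times [0,\delta]$. The three steps are: (i) argue that an optimal placement may be taken with every $c_i$ in the mid-plane $z = \delta/2$, since vertically projecting a sensor onto that plane never shrinks the set of visible points of a horizontal face; (ii) show that, for $\delta$ below the feature size of $P$, visibility within the slab restricted to the face $P \times \{\delta/2\}$ coincides with ordinary 2D visibility in $P$ (a segment clearing $\partial P$ in the plane clears the lateral walls in 3D); (iii) conclude $supp(f) = S$ in 3D iff $k$ guards cover $P$ in 2D. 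Hence Problem~\ref{p:1} is NP-hard.

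For Problem~\ref{p:2} I reuse the slab construction, now starting from an instance of \osg, the 2D set-guarding problem shown NP-hard (indeed NP-hard to approximate within $1.152$) in \cite{FenYuRSS20}. The quality weight $\phi(p,c) = \|pc\|^{-1}$ is preserved by the embedding up to an $O(\delta)$ perturbation, so a polynomially small $\delta$ makes the 3D max-min objective arbitrarily close to its 2D counterpart; the \osg gap instances then transfer, yielding NP-hardness (and the same inapproximability constant) even for $S$ a simple polygon. For Problem~\ref{p:3} I would take the Problem~\ref{p:2} instance just produced and augment it with additional thin 3D occluders (``baffles'') arranged so that every candidate surface point $p \in S$ can be visible to at most one sensor at a time; then $f(c_1,\dots,c_k,p) = \phi(p,c_i)\,vis(p,c_i)$ for the unique visible $i$, so the cumulative model degenerates to the best-quality model and thresholding $f$ at $\Phi$ recovers the set-guarding decision. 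The main obstacle is exactly this last gadget: the occluder geometry must simultaneously (a) enforce ``no two sensors illuminate the same surface point'' without changing which placements are good, (b) keep the angular factor $\langle \hat{n}_p, \hat{n}_{pc_i} \rangle$ bounded away from $0$ on the relevant surface so that visibility and above-threshold exposure coincide, and (c) remain of polynomial size. The slab arguments for Problems~\ref{p:1}--\ref{p:2} are routine once the bound on $\delta$ is fixed; designing and verifying the interference-elimination gadget for Problem~\ref{p:3} is where the real care is needed. Full details are deferred to the extended version.
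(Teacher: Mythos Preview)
Your proposal is correct and follows essentially the same approach as the paper: the thin-slab reduction from 2D Art Gallery for Problem~\ref{p:1}, the thin-slab reduction from \osg for Problem~\ref{p:2} (with the same inapproximability constant), and the addition of 3D occluding structures to eliminate sensor interaction so that Problem~\ref{p:3} collapses to Problem~\ref{p:2}. The paper itself gives only this outline and defers all details to an extended version; your sketch is in fact more fleshed out---the mid-plane projection argument, the $\delta$-vs-feature-size bound, and the explicit list of requirements (a)--(c) for the Problem~\ref{p:3} gadget go beyond what the paper states---and your assessment that the baffle construction is where the real work lies matches the paper's own emphasis.
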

    \vspace{-2mm}


\section{Fast Computation of High-Quality Solutions}\label{sec:algorithm}
In this section, we first describe a polynomial time approximation algorithm for a restricted version of Problem~\ref{p:2}. Then, we describe a general integer linear programming framework for solving Problems~\ref{p:1}-\ref{p:3},
and local improvement techniques for enhancing solution quality.

\subsection{Polynomial Time Approximation Algorithm}
\vspace{-1mm}

In their general forms, Problems~\ref{p:1}-\ref{p:3} require the computation of 3D visibility, a hard task on its own. Due to this reason, a polynomial time algorithm with guaranteed good approximation ratio for these problems appear difficult to come by. It is an interesting question to ask whether some form of approximation scheme can be derived. Here, we show that for Problem~\ref{p:2}, if one relax the visibility requirement, i.e. letting $vis(\cdot, \cdot)\equiv 1$, then a polynomial time $(2+\varepsilon)$-approximation algorithm can be obtained. 

Taking Problem~\ref{p:2}, we examine a setup assuming that each point $p \in S$ has good visibility, i.e., $p$ is always visible to the nearest sensor. Such scenarios happen when the 3D domain does not have large curvatures that would easily block sensors' view, e.g., covering the earth with GPS satellites or using drones to survey a vineyard. To drive a specific approximation bound, 
we further assume that sensors have spherical range sensing and are in a plane of some fixed 
\begin{wrapfigure}[5]{r}{1.3in}
  \vspace*{0mm}
  \begin{overpic}[width=1.3in,tics=5]{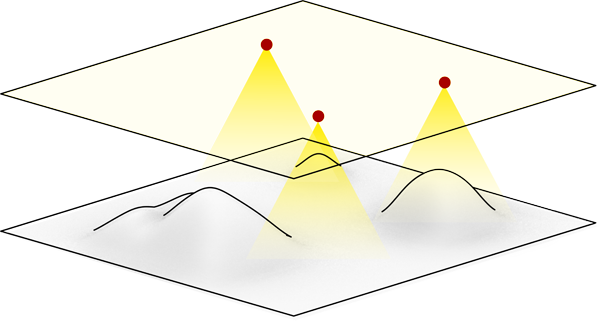}
	\end{overpic}
\vspace*{-6.5mm}
\end{wrapfigure}
height $h$ from the ground, which may be relaxed. Denote this surface as $H_C$. 
The figure on the right provides an illustration of the target environment setting. 

The main idea is to first obtain a dense sample of $S$ and then adapt $2$-approximation algorithms for the corresponding 2D setting, which requires some non-trivial reasoning. Two well-known approximation algorithms for $k$-center like problem in 2D are based on \emph{farthest point clustering} \cite{gonzalez1985clustering} and \emph{dominating set} \cite{hochbaum1985best, vazirani2013approximation}. Both of these approaches work for our purpose; we show how to work with the former.

Let a uniformly sampled set of point of $S$ be $S_N = \{o_1, \ldots, o_N\}$. We apply farthest point clustering \cite{gonzalez1985clustering} on $S_N$ as follows. As the name suggest, it picks farthest sensor locations until the number of sensors are exhausted. In the original approach, the points to be clustered are also sensor locations, which is not true here. Instead, we perform clustering in the set $S_N$ and project the selected samples to $H_C$ gradually. The relatively straightforward process is given in Algorithm~\ref{alg:greedy} (($d(\cdot$,$\ \cdot)$ denotes the distance between the inputs, one or both of which may be sets).

\begin{algorithm}
\begin{small}
    \SetKwInOut{Input}{Input}
    \SetKwInOut{Output}{Output}
    \SetKwComment{Comment}{\% }{}
    \caption{Farthest Point Clustering}
		\label{alg:greedy}
    \SetAlgoLined
		\vspace{1mm}
    \Input{$S_N$=\{$o_1, \dots, o_N$\}: $N$ sampled points on the surface $S\subset \partial \mathcal E$; $k$: number of sensors;\\
    $H_C$: a plane with a fixed height}
    \Output{$\mathcal{C}$: sensor location set}
		\vspace{1mm}
        $\mathcal{C} \leftarrow$  \{$o_1$'s vertical projection onto $H_C$\}\\
        \For{$i \gets 1$ \KwTo $k$}{
        \For{$o\in S_N$}{
        compute the distance $d(o, \mathcal{C})$ between $o$ and $\mathcal{C}$\\ 
        }
        $o \leftarrow$ point $v\in S_N$ with the largest $d(v,\mathcal{C})$\\
        $\mathcal{C} \leftarrow \mathcal{C}\ \cup \{o$'s projection onto $H_C$\}\\
        }
        \Return $\mathcal{C}$
\end{small}
\end{algorithm}

To prove the claimed $(2+\varepsilon)$-approximation bound,  
denote the optimal sensor location set and the sensor location set derived by Algorithm~\ref{alg:greedy} as $\mathcal{C}_{OPT}$ and $\mathcal{C}$, respectively. Since these are centers of spherical sensing ranges, we call them center set for short. Denote the minimum coverage radius in the spherical sensing model as $r_{OPT}$. Let $h$ be the minimum distance between surface $S$ and sensor space $H_C$, i.e. $h:=d(S,H_C)$. $r_{OPT}$ and $r_{\mathcal{C}}$ are defined as follows:
\begin{equation}
    r_{OPT}:=\max_{o\in S_N} d(o, \mathcal{C}_{OPT})
\end{equation}
\begin{equation}
    r_{\mathcal{C}}:=\max_{o\in S_N} d(o, \mathcal{\mathcal{C}})
\end{equation}

\begin{proposition}
\label{prop:algo1t}
The center set obtained by Algorithm~\ref{alg:greedy} achieves coverage radius of at most
$\sqrt{4r_{OPT}^2 - 3h^2}$.
\vspace{-1mm}
\end{proposition}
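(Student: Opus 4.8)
The plan is to adapt the classical Gonzalez analysis of farthest-point clustering, accounting for the asymmetry that the sample points live on $S$ while candidate sensor locations live on $H_{C}$; this asymmetry is precisely what forces the $-3h^{2}$ correction to the textbook factor-$2$ bound.

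First I would record the defining invariant of Algorithm~\ref{alg:greedy}. Let $p_{1}=o_{1},p_{2},\dots$ be the successive ``farthest'' sample points selected, so that $\mathcal{C}$ is the set of their vertical projections onto $H_{C}$; write $\pi(\cdot)$ for this projection and let $o^{\star}\in S_{N}$ realize $r_{\mathcal{C}}=\max_{o\in S_{N}}d(o,\mathcal{C})$. Since each selection maximizes distance to the current center set and this set only grows, one has $d(q,\pi(p_{i}))\ge r_{\mathcal{C}}$ whenever $p_{i}$ was selected before $q$, where $q$ ranges over the later pivots and over $o^{\star}$. This yields at least $k+1$ sample points that are pairwise ``far'' in this one-sided sense (assuming, as one may, that these $k+1$ witnesses are distinct; degenerate instances are handled separately).

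Next, by the pigeonhole principle, two of these $k+1$ points --- say $q$ and $q'$, with $q$ the earlier one --- both lie within $r_{OPT}$ of a common center $c^{\star}\in\mathcal{C}_{OPT}\subset H_{C}$, so $d(q,c^{\star})\le r_{OPT}$ and $d(q',c^{\star})\le r_{OPT}$; combined with the invariant, $r_{\mathcal{C}}\le d(q',\pi(q))$. It therefore suffices to show $d(q',\pi(q))^{2}\le 4r_{OPT}^{2}-3h^{2}$. Here I would choose coordinates with $H_{C}=\{z=0\}$ and $c^{\star}$ at the origin; denoting the horizontal offsets of $q,q'$ from $c^{\star}$ by planar vectors $u,v$ and their heights by $z_{q},z_{q'}\ge h$, the constraints read $|u|^{2}\le r_{OPT}^{2}-z_{q}^{2}$ and $|v|^{2}\le r_{OPT}^{2}-z_{q'}^{2}$, while $d(q',\pi(q))^{2}=|v-u|^{2}+z_{q'}^{2}$. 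Using $|v-u|^{2}\le 2|u|^{2}+2|v|^{2}$ and substituting gives $d(q',\pi(q))^{2}\le 4r_{OPT}^{2}-2z_{q}^{2}-z_{q'}^{2}\le 4r_{OPT}^{2}-3h^{2}$, the last step because $z_{q},z_{q'}\ge h$ (every point of $S_{N}\subset S$ is at distance at least $h=d(S,H_{C})$ from the plane). Chaining the inequalities yields $r_{\mathcal{C}}\le\sqrt{4r_{OPT}^{2}-3h^{2}}$.

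The pigeonhole step and the Gonzalez-style invariant are routine; the step I expect to demand the most care is the coordinate computation, precisely because the algorithm's guarantee is expressed through $d(q',\pi(q))$ rather than $d(q,q')$, so the usual triangle-inequality bound $d(q,q')\le 2r_{OPT}$ cannot be invoked directly (it would in fact only give $d(q',\pi(q))\le 2r_{OPT}+z_{q}$). The crucial observation making it go through is that the height budget $z^{2}\ge h^{2}$ is effectively subtracted from the radius budget $r_{OPT}^{2}$ left for horizontal spread, which is what sharpens the naive factor-$2$ estimate into $\sqrt{4r_{OPT}^{2}-3h^{2}}\le 2r_{OPT}$. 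I would also verify $r_{OPT}\ge h$ so the radicand is nonnegative, and dispatch the degenerate cases (e.g., $o^{\star}$ coinciding with a pivot, or fewer than $k+1$ distinct witnesses being available).
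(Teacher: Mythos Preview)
Your proposal is correct and follows essentially the same Gonzalez-style analysis as the paper, with two minor structural differences worth noting. First, the paper splits into two cases---either each optimal ball (centered at a point of $\mathcal{C}_{OPT}$ with radius $r_{OPT}$) contains exactly one projected algorithm center, or some ball contains at least two---while you collapse these into a single pigeonhole argument on $k+1$ witnesses (the selected pivots together with $o^{\star}$); the latter is the cleaner and more standard presentation. Second, you supply the explicit coordinate computation bounding $d(q',\pi(q))$ that the paper states but does not derive; your use of $|v-u|^{2}\le 2|u|^{2}+2|v|^{2}$ is exactly what is needed, and it is tight at the worst case $z_{q}=z_{q'}=h$, $u=-v$, so nothing is lost relative to the sharper $(|u|+|v|)^{2}$ bound.
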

\begin{proof}


Denote the center set generated at the 
$i$th round as $\mathcal{C}_{i}$, and $r_i$ as the cluster radius $r_i := \max_{o_\tau}\min_{c_j \in \mathcal{C}_i} d(o_\tau, c_j)$. It is straightforward to observe that $r_k\leq r_{k-1} \leq \dots \leq r_1$.
Consider the relationship between the optimal center set $\mathcal{C}_{OPT}$ and the center set obtained by Algorithm~\ref{alg:greedy}, we have the following 2 cases.

Case 1: For each sphere $\mathcal{B}_{c}$ centered at a point $c\in\mathcal{C}_{OPT}$ with radius of $r_{OPT}$, the projection of $\mathcal{B}_{c}\bigcap S$ onto the sensor space $H_C$
contains exactly one point of $\mathcal{C}_k$. 


In this case, let $v$ be an arbitrary point in $S$. 
Let $c_{\alpha}$ be the nearest center to $v$ in $\mathcal{C}_{OPT}$ 
and $c_{\beta}$ be the point in $\mathcal{C}_k$ whose projection on $S$ is inside $\mathcal{B}_{c_{\alpha}}$. 
Therefore, we have: 
\begin{equation}
    d(v, c_\beta) 
    \leq 
    \sqrt{4r_{OPT}^2 - 3h^2}
\end{equation}

Case 2: There exists a sphere $\mathcal{B}_{c}$ centered at a point $c\in\mathcal{C}_{OPT}$ with radius of $r_{OPT}$, the projection of $\mathcal{B}_{c}\bigcap S$ onto the sensor space $H_C$
contains at least 2 points of $\mathcal{C}_k$. 
In this case, denote the two centers by $c_i$ and $c_j$ $(i<j)$, and their projections on $S$ are in the same sphere $\mathcal{B}_c$. As $c_j$ is added after $c_i$, then,
\begin{equation}
    \begin{split}
    r_\mathcal{C} = r_k 
    &\leq r_{j}
    \leq d(c_i, c_j\text{'s projection on } S)\\
    &\leq \sqrt{4r_{OPT}^2 - 3h^2}
    \end{split}
\end{equation}
Summarizing the two cases proves Proposition~\ref{prop:algo1t}.
\vspace{-2mm}
\end{proof}

\vspace{-0.05in}

\vspace{-1mm}
\subsection{Integer Programming-Based Algorithmic Framework}
With Problems~\ref{p:1}-\ref{p:3} being computationally intractable, a natural algorithmic alternative is mathematical programming. In \cite{FenYuRSS20}, an integer linear programming (ILP) model was shown to be effective for a 2D setting. For our 3D problems, visibility constraints must be effectively handled. We pre-compute pairwise visibility at a given sample granularity. The information is then fed to an ILP model. As the discretization granularity gets smaller, we can then realize \emph{globally optimal} $(1\pm \varepsilon)$-approximations (depending whether it is a maximization or a minimization). 

As a first step to building the ILP model, visibility information must be computed. 
We work with two discretizations, the surface $S$ to be covered and the space where the sensors may be deployed (as discussed in Section~\ref{sec:preliminary}, this is a 3D space though in practice it is frequently a 2D subset). For each pair of samples, we use a collision checker \cite{cgal:aabb-20b} to determine whether the line segments between the two samples intersects $\mathcal E$. During the process, we also compute for each sample $p\in S$ its normal $\hat{n}_p$.


%
With the visibility pre-computation performed, we are ready to construct the fully ILP models. For all three problems, recall that we have $S_N = \{o_1, \ldots, o_N\}$ for discretizing the surface $S$ through grid-based sampling. 
We use boolean variable $y_i$ to indicate whether sample $o_i$ is covered. 
Candidate sensor locations are also discretized to obtain a sample set $\{c_1, \ldots, c_M\}$, from which $k$ locations would be selected with $z_i$ indicating whether $c_i$ is selected. The ILP model for For Problem~\ref{p:1} is
\begin{gather}
    y_i   \leq \sum_{j\ s.t.\ vis(o_i, c_j) = 1} z_j   \text{\quad for each } o_i\\
    \sum_j z_j \leq k\\
    \max\ \ y_1 + \dots + y_N
\end{gather}


The cumulative quality case (Problem~\ref{p:3}) is similar. Denoting the sensing
quality between sensor location $c_j$ and surface point $p$ 
as $\phi(p, c_j) = vis(p, c_j) \cdot (
\hat{n}_p, \hat{n}_{p c_j} )/||p c_j||^2$, the ILP model may be constructed as

\begin{gather}
    y_i \cdot \Phi  \leq \sum_{j} \phi(o_i, c_j)\cdot z_j   \text{\quad for each }o_i\\
    \sum_j z_j \leq k\\
    \max\ \ y_1 + \dots + y_N
\end{gather}


For quality maximization (Problem~\ref{p:2}), the objective is to maximize 
the minimum distance of a sampled point on the surface to its nearest sensor 
location. 
For a required coverage ratio $\rho$ and radius $r$, we can verify whether it is possible to put $k$ sensors and cover $N\rho$ discretized points by checking the 
feasibility of the following model:
\begin{gather}
    y_i \leq \sum_{j\ s.t.\ ||c_j - o_i|| \leq r}  z_j \text{\quad for each } o_i\\
    \sum_j z_j \leq k\\
    N \rho \leq \sum_i y_i 
\end{gather}
A subsequent binary search can be applied to find the smallest feasible $r$. 

\subsection{Local Enhancement of Coverage Quality}
Whereas the ILP models for Problems~\ref{p:1}-\ref{p:3} support arbitrary precision, given that these problems are all computationally intractable, it can be expected that a pure ILP-based solution will only be scalable up to a certain point before an exorbitant amount computation time is needed. Inspired by the iterative update approach form \cite{cortes2004coverage}, we propose a two-phase optimization pipeline of using ILP (or the approximation algorithm for Problem~\ref{p:2}) as the first phase with a good level of \emph{global} optimality guarantee and follow that with \emph{local} improvements that can be quickly computed to enhance the initial solution. We note that, as the local improvement is enhancing a solution with a level of global optimality guarantee, the enhancement is also global in effect. For example, in Problem~\ref{p:2}, if we start with a $2$-approximation solution and obtain an initial coverage quality $r$ and subsequent local improvement reduce that to $0.75r$, then the final solution is a globally $1.5$-optimal solution.

We develop two such routines. The first is generally applicable and straightforward to implement: as the discretization level increases, we move the set of initial sensor locations (computed by Algorithm~\ref{alg:greedy} or ILP) locally, one at a time. More formally, given an initial solution $\mathcal C = \{c_1, \dots, c_k\}$, denote $S_j \subset S$ as the region covered (possibly partially when working with Problem~\ref{p:3}) by the sensor deployed at $c_j$. For each $S_j$, we try improving the quality of the solution by finding a better location for $c_j$ to cover $S_j$ at a finer resolution. Subsequently, $S_j$ can be updated based on the new $c_j$. The process may be repeated until convergence. 

The second local improvement routine is via solving a ``1-cener'' like problem and is applicable to Problem~\ref{p:2} and Problem~\ref{p:3}. Due to limited space, we omit the lengthy algorithmic details and give a high-level description. For Problem~\ref{p:2}, a sensor located at $c_j$ is ``responsible'' for visible points of $S$ that falls within a ball $B(c_j, r)$. Our improvement routine examines $S \cap B(c_j, r)$ and attempts to compute a new ball with a smaller radius that covers all of $S \cap B(c_j, r)$. The routine uses the ideas from Welzl's algorithm for computing minimum enclosing discs \cite{welzl1991smallest, Mark1997computation} and take time that is expected linear with respect to the number of samples that falls within $B(c_j, r)$, which is fairly fast. This method can be readily extended to Problem~\ref{p:3} where the spheres become ``distorted'' (Fig.~\ref{fig:exposure}).

\section{Experimental Evaluation}\label{sec:evaluation}
For each of Problems~\ref{p:1}-\ref{p:3}, extensive experimental evaluations were carried out to evaluate our proposed algorithmic solutions. Here, we present representative evaluation demonstrating the effectiveness of our methods, with a focus on three realistic settings (the ICU model from Fig.~\ref{fig:ex}, bus and subway car models shown in Fig.~\ref{fig:bus-subway}). %
For all environments, the surface $S$ is selected to be all visible surfaces not facing downward.
Due to limited space, result on the (2+$\varepsilon$)-approximation algorithm (Algorithm~\ref{alg:greedy}) is omitted (as shown in \cite{FenYuRSS20}, such methods are fast but are quite sub-optimal). The experiments were carried out on a median-end quad-core Intel i7 processor with 16GiB RAM. 
Algorithms were implemented in C++. Gurobi \cite{gurobi} was used as the Integer Programming solver. 
Source code is available at 
{\small \url{https://github.com/rutgers-arc-lab/3d_coverage}}.

\begin{figure}[!ht]
    \centering
    \includegraphics[width = .4\columnwidth]{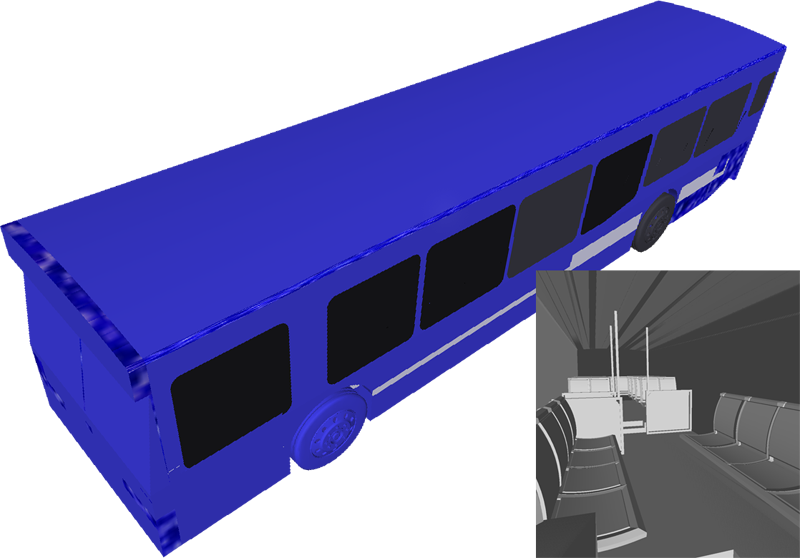}\hspace{3mm}
    \includegraphics[width = .4\columnwidth]{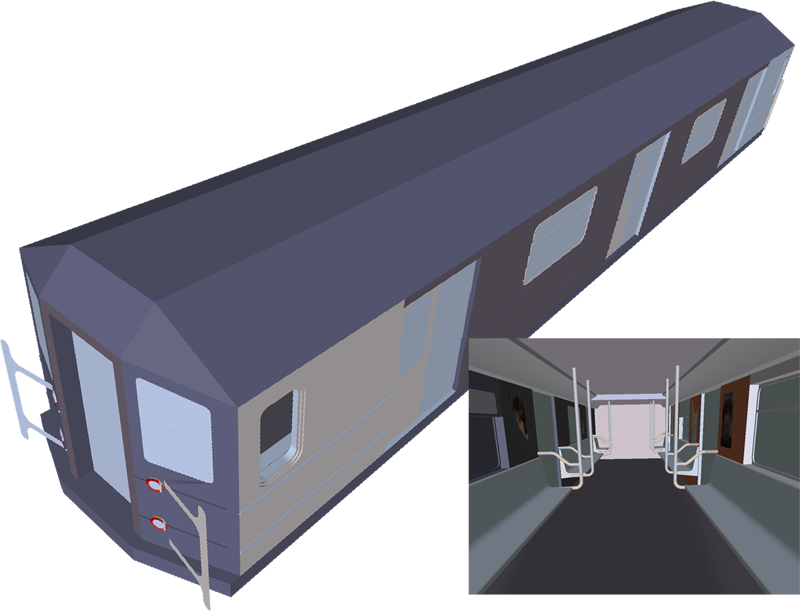}
    \caption{Realistic 3D environments used in our evaluation in addition to the ICU model from Fig.~\ref{fig:ex}. [left] A 40-foot large bus model and its interior. [right] A subway car and its interior.}
    \label{fig:bus-subway}
\end{figure}

Results on Problem~\ref{p:1}, using ILP, is given in Fig.~\ref{fig:coverage-ratio-vis}. 
For each model, $600$ candidate sensor locations and $20,000$ coverage surface points are sampled using grids. 
As expected, the surface coverage ratios increase as the number of sensors increase, approaching full coverage. We note that certain surface region is not visible, e.g., ground underneath seats in subway cars, leading to plateaus below $100\%$ coverage. The computation time is very reasonable for offline computations. The spikes in the middle of the computation time plot correspond to hard cases when the visibility coverage is about to plateau. We also observe that subway $<$ ICU $<$ bus in terms 
of computation time, which may be explained by the interior complexity of these environments. This aspect is different across the three problems.

\begin{figure}[!ht]
\vspace{1mm}
    \centering
    \includegraphics[width=0.475\columnwidth, height=1.in]{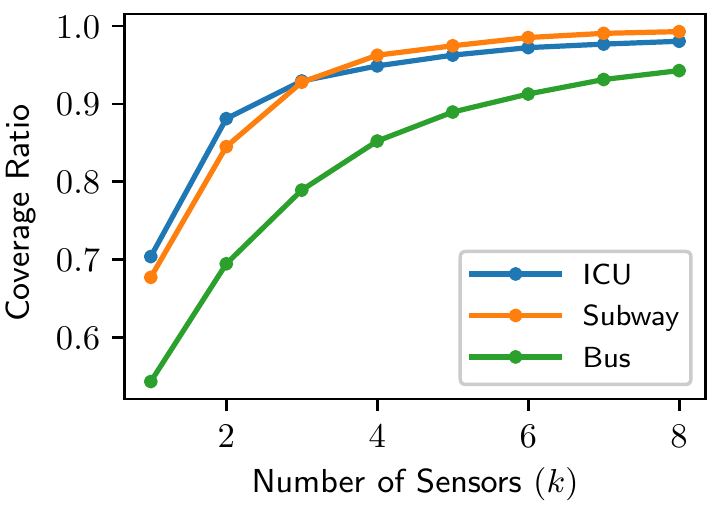}
    \includegraphics[width=0.49\columnwidth,height=1.in]{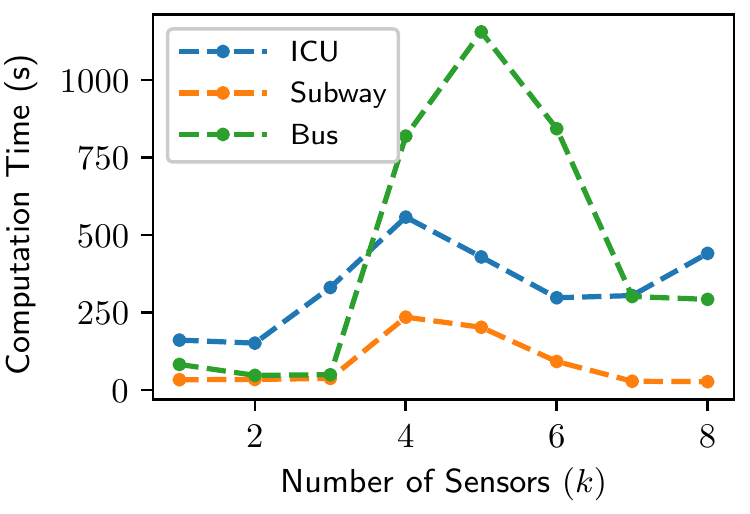}
    \caption{Coverage quality and computation time for Problem~\ref{p:1} for the three environments as the number of sensors change.}
    \label{fig:coverage-ratio-vis}

\end{figure}

\begin{wrapfigure}[7]{r}{1.1in}
  \vspace*{0mm}
  \begin{overpic}[width=1.1in,tics=5]{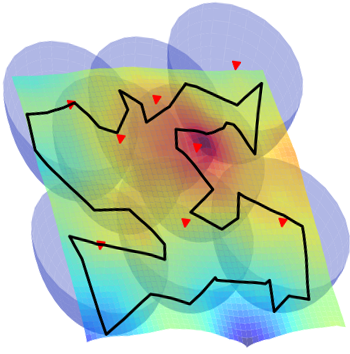}
	\end{overpic}
\vspace*{-6.5mm}
\end{wrapfigure}
In evaluating Problem~\ref{p:2}, we first examine a case where mobile sensors
(e.g., camera drones) are deployed to cover a synthetic terrain with relatively 
small curvature, i.e., $vis(\cdot, \cdot) \equiv 1$. An illustration of the 
setting is given in the figure on the right, where the color indicates the height 
of the terrain. The sensors (8 red triangles in the figure) are placed
at a fixed height above the terrain and must guard the region enclosed 
in the black curve. Spherical range sensing is assumed. For the setup 
($600$ sensor locations and $20,000$ surface points), computation time 
and solution quality as the number of sensors changes are listed in the table. Computation time decreases as the number of sensors 
increases, indicating the problem is harder when sensors are too few to provide 
a good coverage. It also shows that the ILP running time does not depend positively
on sensor quantity. The quality (smaller is better) increase becomes minimal
as sensor quantity reaches $10$.

\vspace{1mm}
\begin{table}[!ht]
    \centering
    \begin{tabular}{|c|c|c|c|c|c|c|c|c|}
    \hline
        \#Sensors   & 2     &  4    & 6      & 8     & 10    & 12    & 14    & 16 \\
        \hline
        Time (s)    & \;42.9\;\;     &  \;25.4\;\;      & \;18.5\;\;     & \;12.7\;\;    & \;13.1\;\;    & \;11.3\;\;  & \;7.64\; &   \;6.70\; \\ 
        \hline
        Radius  & 5.10     &  3.31    & 2.76      & 2.43     & 2.27    & 2.16    & 2.07    & 1.99 \\
        \hline
    \end{tabular}
    \label{tab:Terrain}
\end{table}

In a second evaluation of Problem~\ref{p:2}, visibility is considered with the 
optimization coverage ratio set to $80\%$. That is, at least $80\%$ of the
maximum visible target surface (for a given $k$) will be guaranteed the achieved coverage 
quality. The result, summarized in Fig.~\ref{fig:coverage-ratio-mq}, again 
demonstrates a negative correlation between the computational time and the number 
of sensors. Here, however, the computational time is $20+$ times  more 
than when having full visibility.

\begin{figure}[!ht]
    \centering
    \includegraphics[width=.46\columnwidth, height=0.98in]{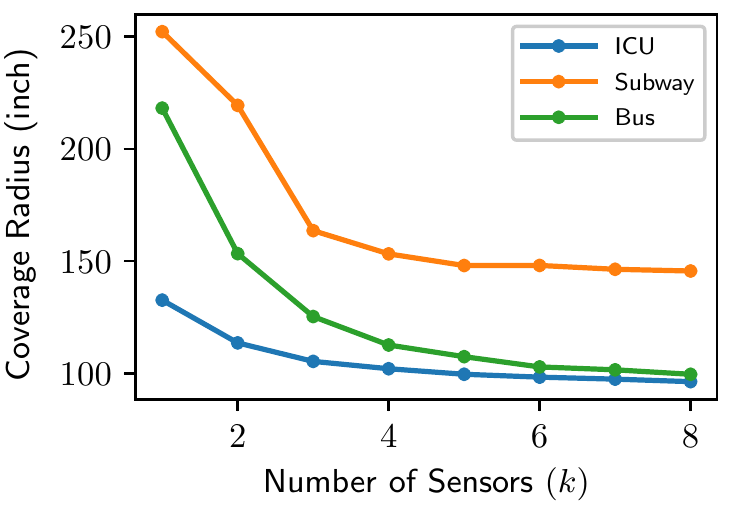}
    \includegraphics[width=.49\columnwidth, height=1.in]{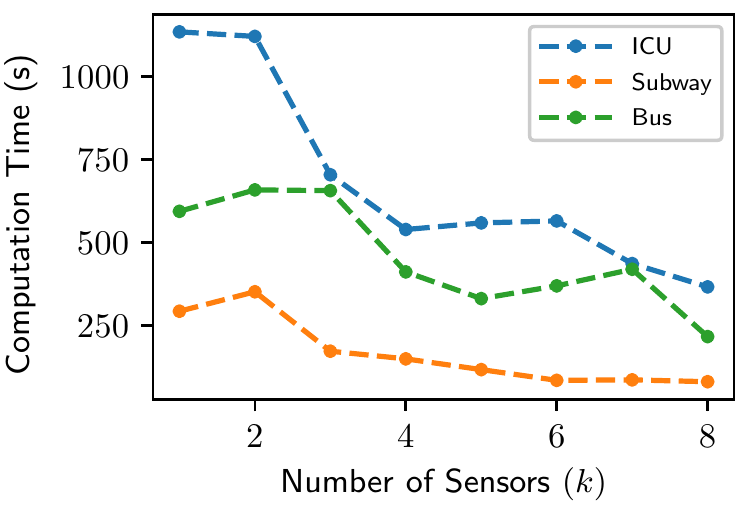}
    \caption{Coverage quality (lower is better) and computation time for Problem~\ref{p:2}
    for the three environments, as sensors increase.}
    \label{fig:coverage-ratio-mq}
\end{figure}

Our last benchmark on Problem~\ref{p:2} tests the effectiveness of the local improvement following the resolution of a coarsely generated ILP, at $60$ candidate sensor locations and $1000$ surface samples (Fig.~\ref{fig:coverage-ratio-cu}). The right figure shows much faster computation time. The left figure shows that the  faster method does a decent job for the bus environment (other environments have similar outcomes). The result suggests which method to use would depend on whether computational time or solution optimality is more important to the task at hand. We note that the local improvement method does not help improve the ILP result at the higher resolution. 

\begin{figure}[!ht]
    \centering
    \includegraphics[width=.48\columnwidth, height=1.in]{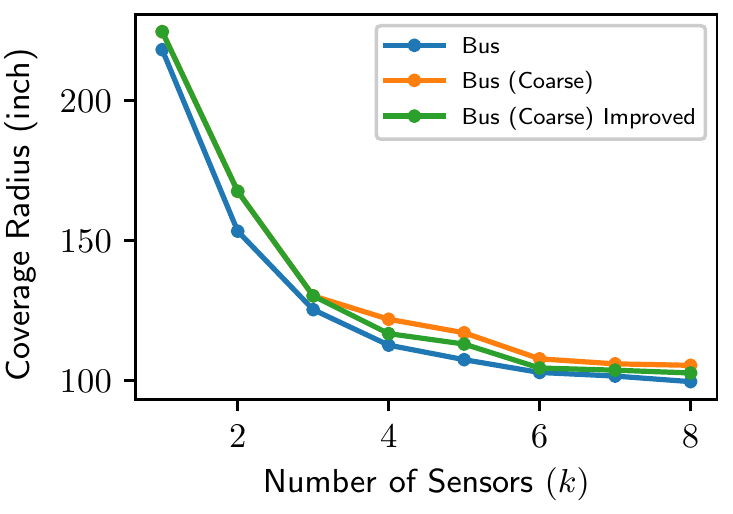}
    \includegraphics[width=.48\columnwidth, height=1.in]{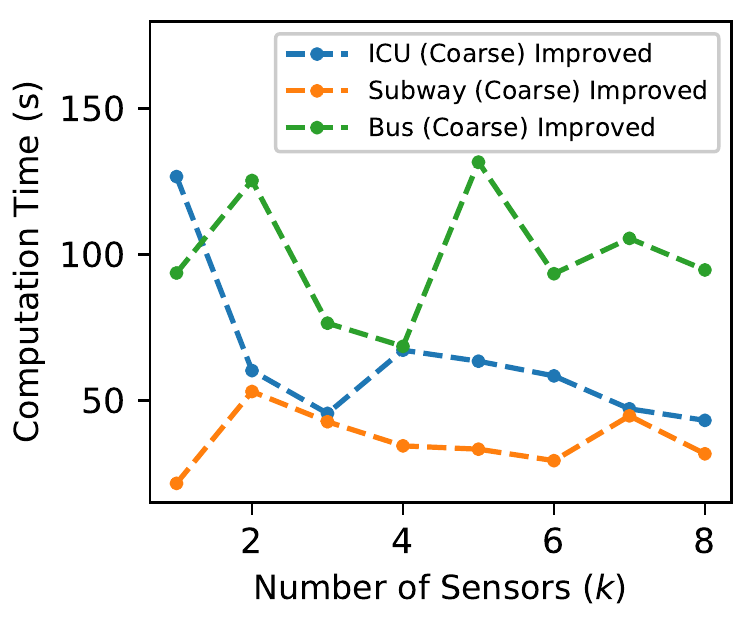}    
    \caption{ [left] Solution quality (lower is better) for the bus environment. 
    The first curve (Bus) is the same as that from Fig.~\ref{fig:coverage-ratio-mq}. 
    [right] Computation time using coarse ILP + local improvement.}
\label{fig:coverage-ratio-cu}
\end{figure}

For Problem~\ref{p:3}, computation becomes more demanding. At the specified 
discretization level, most ILP models did not complete the optimization process 
in $10$ minutes. The intermediate quality result is given in Fig.~\ref{fig:computation-time-3}, on the left (the same threshold, selected to make the computation challenging, is used for all three environments), where the lines 
corresponds to the coverage ratio returned by the ILP model and the attached vertical bars show the reported optimality gap. The crosses show the updated ratio after local improvement is carried out (the triangles will be explained shortly). 
The subway data was shifted to the left to improve readability.
For the bus, we see that the local improvement does help improve solution optimality, suggesting it is the most difficult problem. For the other two, it appears that the solution by the ILP model is already quite optimal, but the ILP solver still needs time to close the gap from the above. The subway case has worse coverage by the same number of sensors because it is larger. If we run a coarse ILP model ($60$ sensor candidates, $1000$ surface sample) for one minute and then do local improvement, we get coverage ratios shown as the triangles in Fig.~\ref{fig:computation-time-3}, left. Fig.~\ref{fig:computation-time-3}, right shows the total computation time used. We observe that except for the challenging bus model, the faster method achieves essentially identical optimality as running ILP at higher resolutions. Subway costs most time here because it is the largest. 

\begin{figure}[!ht]
    \centering
    \includegraphics[width=.48\columnwidth, height=1in]{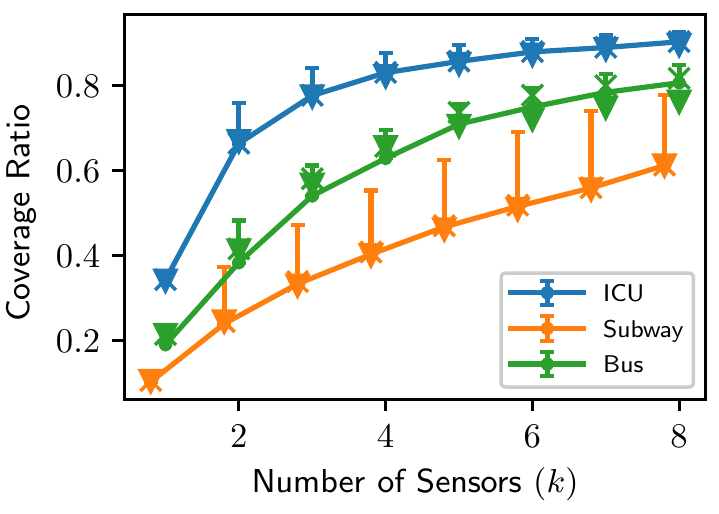}
    \includegraphics[width=.48\columnwidth, height=1in]{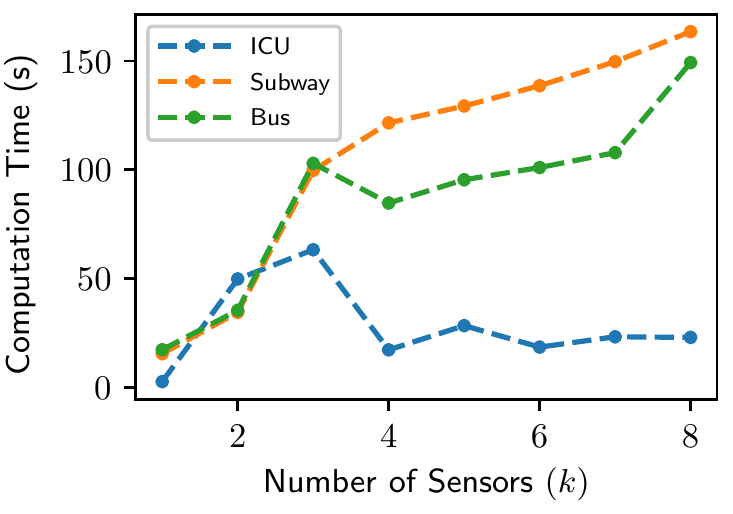}
    \caption{[left] Coverage ratio (lines) for Problem~\ref{p:3} returned by multiple methods. [right] Computation time used by running a coarse ILP plus local improvements.}
    \label{fig:computation-time-3}
\end{figure}


Lastly, we provide some additional visualization to help further demonstrate the structure of the problems. Fig.~\ref{fig:icu-comp} shows that Problems~\ref{p:2} and~\ref{p:3} induce different optimal distribution of sensors. Generally, Problems~\ref{p:2} tends to cause the sensors to be evenly spaced out. On the other hand, Problem~\ref{p:3} tends to balance between spacing out sensors and provide good cumulative coverage, which may require sensors to aggregate, which can be observed in Fig.~\ref{fig:bus}.

\begin{figure}[!ht]
\vspace{1mm}
    \centering
    \includegraphics[width = 0.35\columnwidth]{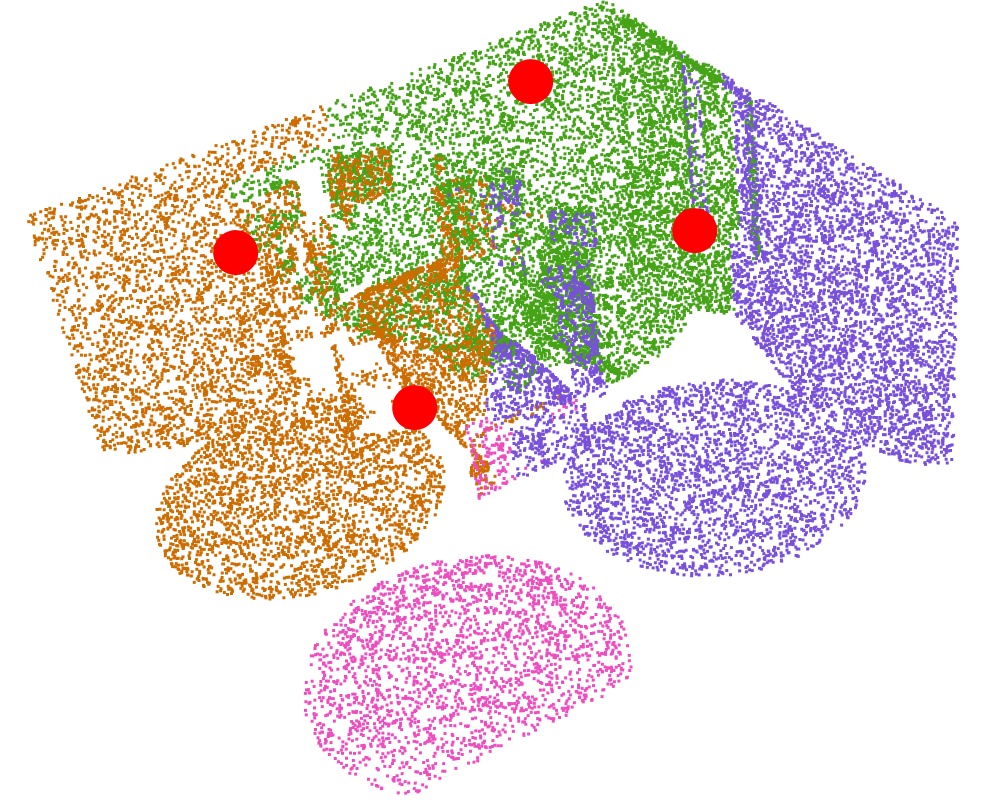}\hspace{3mm}
    \includegraphics[width = 0.35\columnwidth]{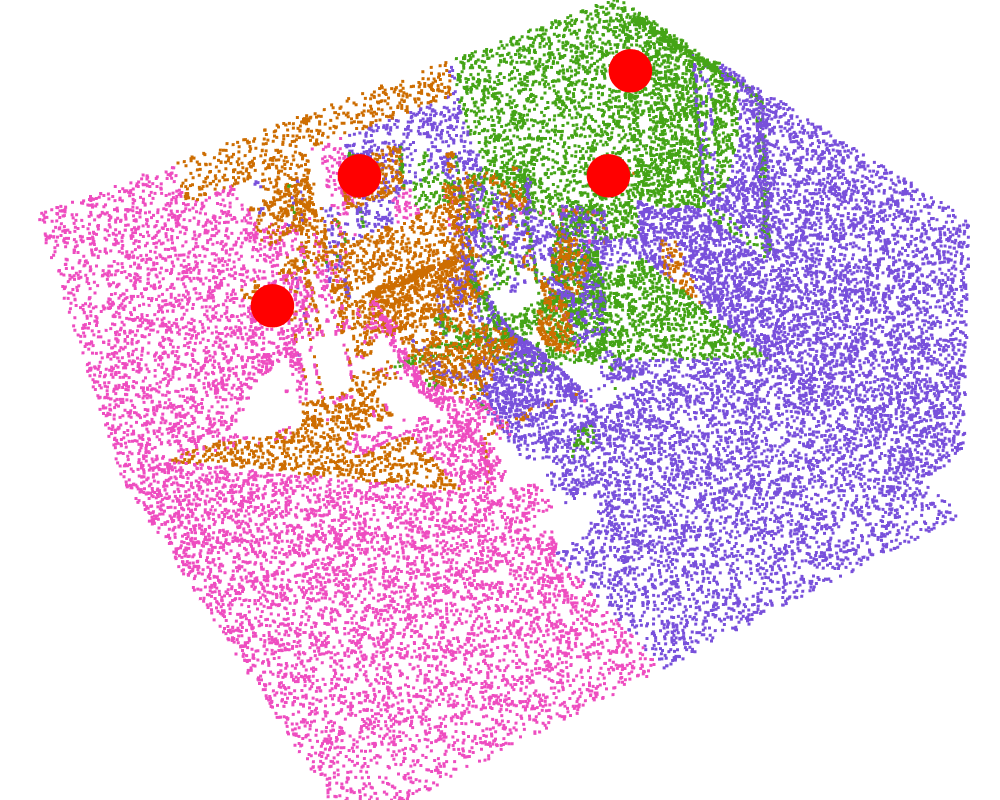}
\vspace{1mm}
    \caption{$4$ sensor ICU result for Problems~\ref{p:2} (left) and~\ref{p:3}(right).}
    \label{fig:icu-comp}
\end{figure}

\begin{figure}[!ht]
\vspace{-1mm}
    \centering
    \includegraphics[width = .25\columnwidth]{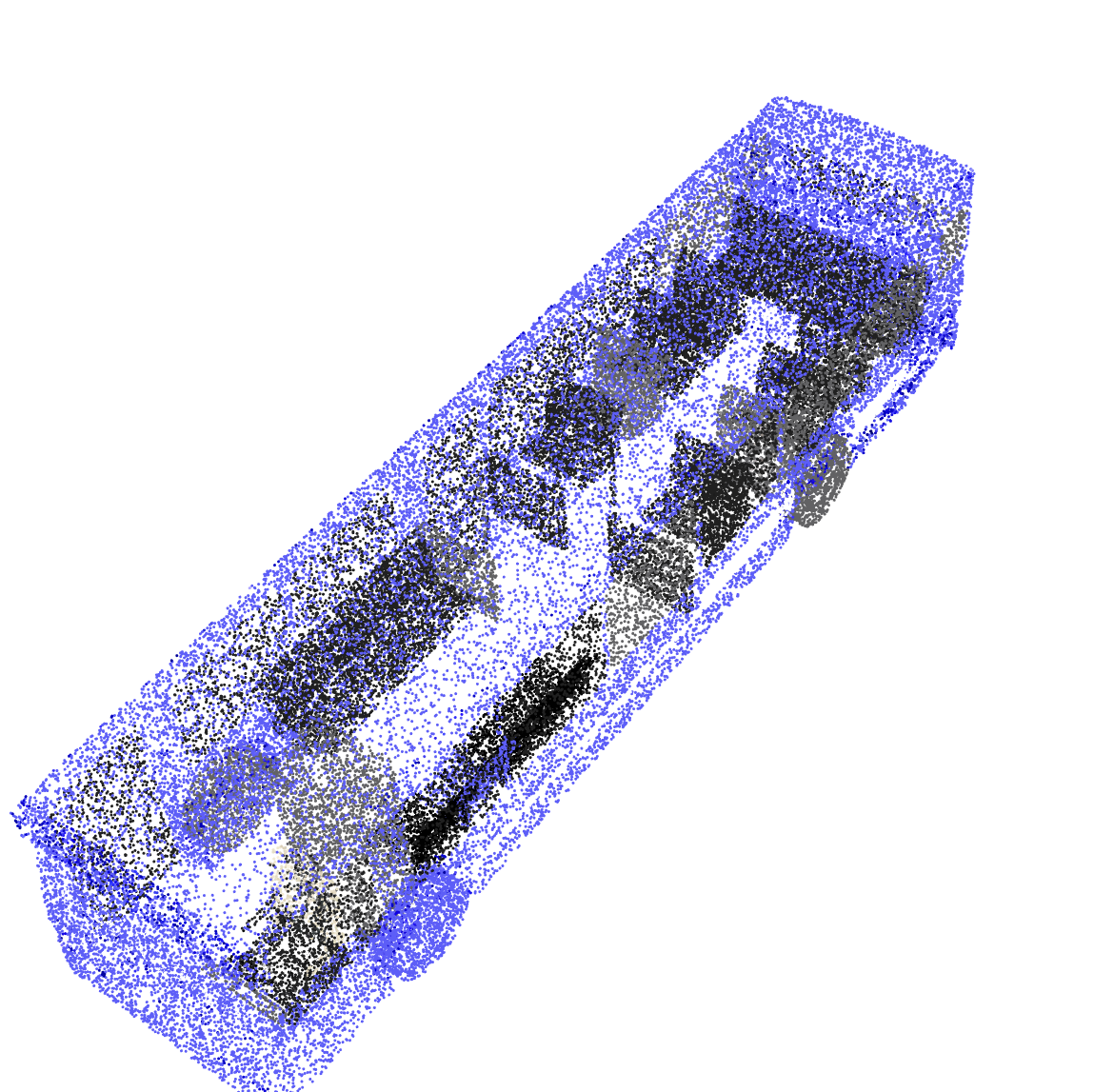}
    \includegraphics[width = .23\columnwidth]{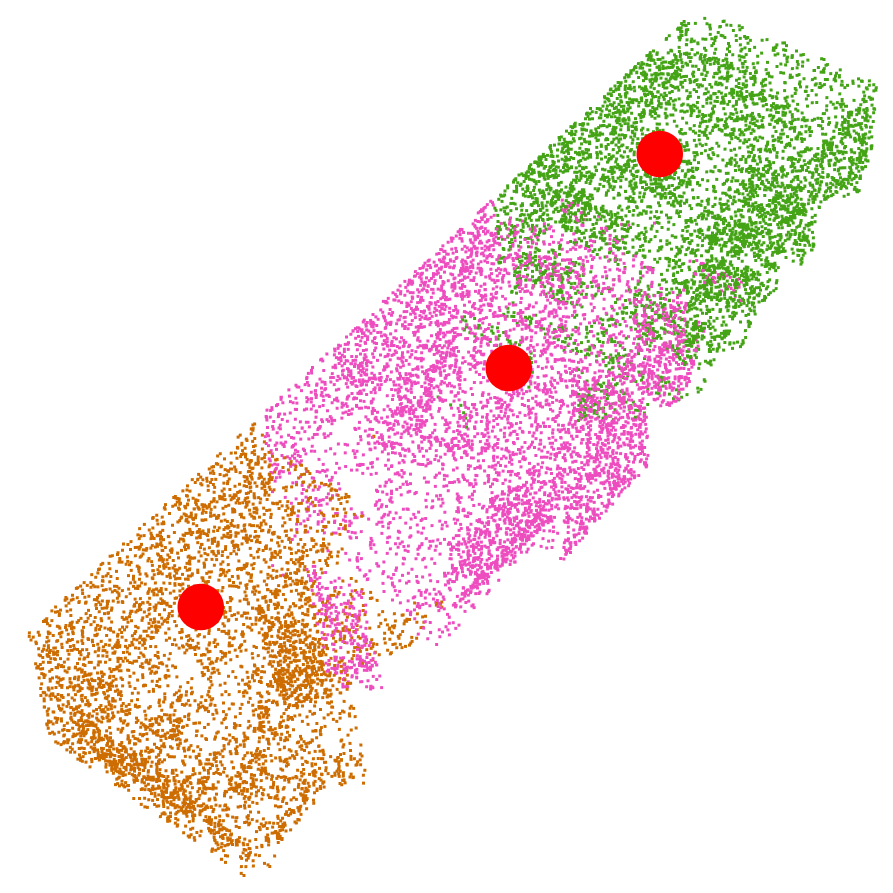}
    \includegraphics[width = .23\columnwidth]{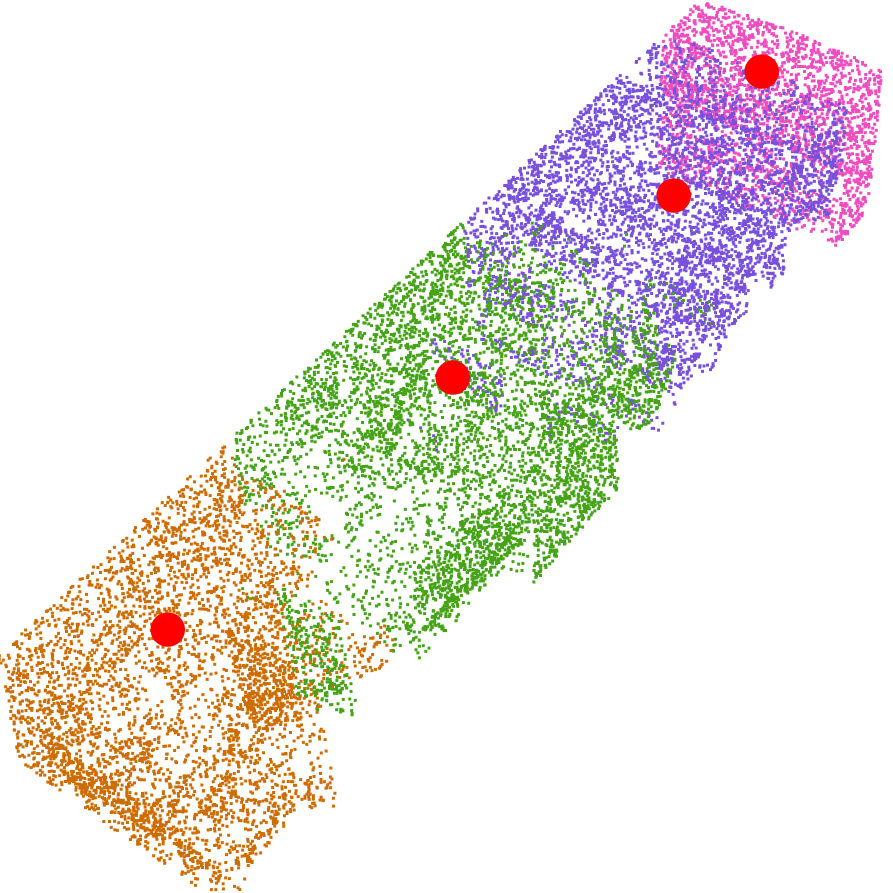}
    \includegraphics[width = .23\columnwidth]{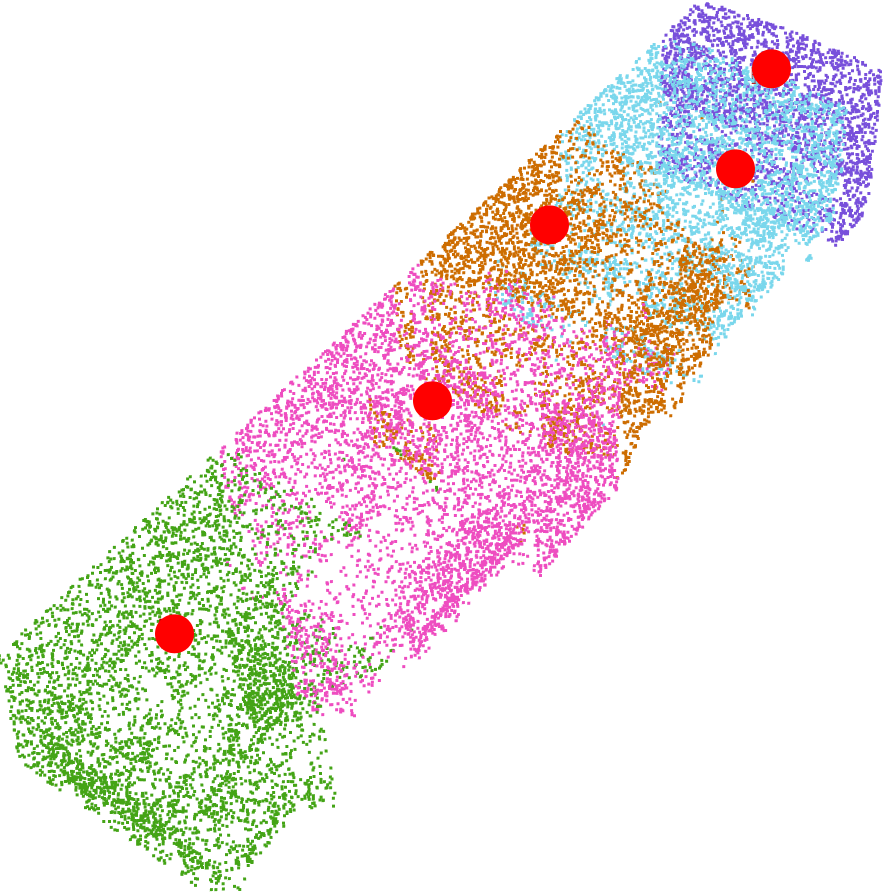}
    \caption{The coverage of bus (see through model on the left) using $3$ to $5$ sensors under Problem~\ref{p:3}. Aggregation of sensors at the front of the bus, which is structurally more complex, can be observed. }
    \label{fig:bus}
\end{figure}

\vspace{-1mm}
\section{Conclusion}\label{sec:conclusion}
\vspace{-1mm}
We have formulated a general Sensor Placement for Optimal Coverage (\spoc) problem with three concrete instantiations, each with distinct practical applications. We provide near-optimal methods for solving these challenging optimization problems and demonstrated their effectiveness with extensive evaluations. We are currently exploring real-world applications of our methods.  




\end{document}